\documentclass[final]{l4dc2026}

\usepackage{times}
\usepackage{tikz}
\usetikzlibrary{decorations.pathmorphing,patterns}
\usepackage{algorithm}
\usepackage{algorithmic}
\usepackage{float}
\usepackage{textcomp}
\usetikzlibrary{calc}
\usetikzlibrary{arrows.meta}
\usepackage{enumitem}
\usepackage{thm-restate}
\usepackage{hyperref}

\newcommand{\Comp}{\mathsf{Comp}}

\newcommand{\jhedit}[1]{{\color{black}#1}}

\newcommand{\cwtodo}[1]{}

\DeclareUnicodeCharacter{2032}{\ensuremath{\prime}}

\title[Formalizing Task-Space Complexity for Zero-Shot Generalization]{Formalizing Task-Space Complexity for Zero-Shot Generalization}

\author{%
 \Name{Jung-Hoon Cho} \Email{jhooncho@mit.edu}\\
 \addr Massachusetts Institute of Technology
 \AND
 \Name{Heling Zhang} \Email{hzhng120@illinois.edu}\\
 \Name{Siqi Du} \Email{siqidu3@illinois.edu}\\
 \Name{Roy Dong} \Email{roydong@illinois.edu}\\
 \addr University of Illinois Urbana-Champaign
 \AND
 \Name{Cathy Wu} \Email{cathywu@mit.edu}\\
 \addr Massachusetts Institute of Technology
}

\begin{document}

\maketitle

\begin{abstract}
    Policies must operate across diverse conditions, yet a single policy is often conservative while fully adaptive schemes can be complex. We study zero-shot generalization in contextual dynamical systems and introduce a performance-centric, directional task dissimilarity---the signed divergence---that upper bounds the generalization gap from a source context to a target context. The signed divergence induces $\varepsilon$-tolerance sets that certify when a source policy class generalizes, and it yields a concrete notion of task-space complexity: the minimum number of source contexts needed so that every target context incurs at most $\varepsilon$ generalization gap. Under a mild local smoothness assumption on performance, the induced tolerance sets admit certified inner/outer balls and instance-dependent volume bounds on task-space complexity. In the finite-oracle setting, source selection reduces to set cover; a greedy strategy inherits the standard $H(n)$ approximation guarantee. Using a Mass-Spring-Damper system with linear–quadratic regulator (LQR) controllers and a nonlinear CartPole system with deep reinforcement learning controllers, we show that greedy selection achieves the same $\varepsilon$-coverage with fewer policies than uniform or random baselines. Our approach delivers a performance-based task similarity measure and practical certificates for building generalizable control with simple policies.
\end{abstract}

\begin{keywords}
  Generalization, Reinforcement Learning, Task Space Complexity, Zero-shot Transfer.
\end{keywords}

\section{Introduction}\label{sec:introduction}
\jhedit{Modern control systems increasingly operate across diverse conditions---robots handle objects of varying mass, autonomous vehicles drive on surfaces with different friction, and physical systems vary in stiffness, damping, and load. In such contextual dynamical systems, each task is parameterized by an observable context vector $\theta$ in a context space $\Theta$. Traditional control theory, and its machine learning extension, reinforcement learning (RL), have long focused on designing a single feedback law that works for a class of dynamical systems. However, as system diversity grows, this one-size-fits-all paradigm is reaching its limits.}

For instance, classical robust control \jhedit{seeks a simple context-independent controller that stabilizes all possible parameter realizations, often yielding conservative performance \citep{zhou1998essentials}. Adaptive control continuously tunes its parameters online, but can become unnecessarily complex when environments change slowly or when online identification is difficult \citep{ioannou1996robust}. Similarly, multi-task deep RL aims to train a single context-conditioned controller across tasks, but such policies often underperform specialized single-task policies or require excessive capacity and data \citep{kang2011learning, parisotto16actormimic}.}

\jhedit{Recent empirical work, including our own, suggests a complementary alternative: rather than searching for one universal controller, train multiple ``simple” context-independent controllers, each specialized for a narrow region of the context space, and switch among them as contexts change \citep{cho2023temporal, cho2024model}. 
This approach, akin to gain scheduling, has demonstrated strong empirical performance in several settings \citep{rugh2000research}. 
However, to the best of our knowledge, despite these successes, there remains no theoretical framework explaining when and why such strategies succeed.}
\jhedit{To address this gap, we aim to formalize the \emph{task-space complexity} of a class of dynamical systems according to the difficulty of achieving a uniform bound of $\varepsilon$-optimal performance, which we characterize in terms of the minimal number of narrowly-trained controllers. 
Owing to the empirical motivation, this work focuses on \emph{zero-shot generalization} in contextual dynamical systems, although we expect that the framework will generalize to other training and generalization paradigms. In this setting, a controller trained at one context \jhedit{is directly applied to} another without retraining or adaptation.}

\jhedit{To make this notion operational, we require a principled performance-based measure of task dissimilarity between contexts that reflects the generalization gap rather than raw parameter distance.} A common heuristic is to use Euclidean or Wasserstein distance in context space, implicitly assuming that nearby contexts are similar \citep{slivkins2011contextual,modi2018markov,cho2024model,dick2025statistical}. 
However, this assumption can be misleading: performance may be insensitive to some coordinates and highly sensitive to others. We therefore introduce a \emph{performance-based} and \emph{directional} dissimilarity, called the signed divergence, that upper-bounds source-relative transfer loss when reusing policies from one context at another.
The signed divergence naturally leads to a formal definition of task-space complexity---the minimal number of simple policies needed to achieve $\varepsilon$-tolerant performance across all contexts. Intuitively, systems with smooth performance landscapes (where the signed divergence changes slowly across contexts) exhibit low task-space complexity, while systems with sharp transitions require many specialized controllers.
\jhedit{Operationally, calculating task-space complexity reduces to selecting a minimal set of sources whose $\varepsilon$-tolerance sets cover the context space $\Theta$. This is exactly \jhedit{an instance of} a \emph{set cover} instance over the context space: each trained source induces a tolerance set, and the goal is to cover all contexts with as few such sets as possible. We design a greedy selection rule that iteratively chooses the source providing the largest marginal coverage, enjoying the classical $H(n)$-approximation guarantee for set cover.}

Our contributions are as follows:
\setlist{nolistsep}
\begin{enumerate}[noitemsep]
    \item A directional, performance-based signed divergence that upper-bounds zero-shot transfer loss and is weaker than standard model smoothness assumptions.
    \item A formal definition of task-space complexity via $\varepsilon$-tolerance sets, together with inner/outer geometric certificates and volume-based bounds.
    \item A set-cover formulation of source selection with a greedy $H(n)$ guarantee in the finite-oracle setting, validated on linear and nonlinear systems using oracle tolerance sets.
\end{enumerate}

\section{Related Work}\label{sec:related-work}
\textbf{Generalization in Control and RL.}
The challenge of designing controllers for a family of systems is a long-standing problem in control theory. Gain scheduling is a classical engineering approach in which controllers are designed for a set of operating points and interpolated for intermediate points \citep{rugh2000research}. 
While practical, this method often lacks formal performance guarantees for the interpolated controllers. Our work provides a discrete alternative based on certifying transfer regions. 
In RL, generalization remains a central challenge, with efforts focused on domain randomization \citep{tobin2017domain}, meta-learning \citep{finn2017model}, and multi-task learning \citep{caruana1997multitask, wilson2007multi, zhang2021survey, hendawy2023multi} to train policies that are robust to environmental variations. 
Recent work has studied when generalizable RL is tractable \citep{malik2021generalizable} and how to obtain provable zero-shot transfer in offline settings \citep{wang2025provable, zhang2025pessimism}. 
Multi-policy training with zero-shot transfer has also been explored empirically to mitigate instability in both optimization and generalization \citep{cho2023temporal, cho2024model}.

\noindent \textbf{Contextual Dynamical Systems.}
We consider contextual dynamical systems, which are a collection of related dynamical systems parameterized by the context vector. For example, consider a set of linear systems parameterized by a context vector $\theta$, or consider \jhedit{Contextual Markov Decision Processes (CMDPs)}. 
The CMDP framework, which extends MDPs by introducing a context vector that \jhedit{parameterizes the} system dynamics and rewards, is the formal model for the systems we consider \citep{hallak2015contextual, modi2018markov}.
Much of the theory for CMDPs derives generalization bounds from smoothness assumptions on the underlying model parameters. A canonical example is Cover-Rmax algorithm proposed by \citep{modi2018markov}, where the ball radii are determined by known global Lipschitz constants of the dynamics.
Our work departs from this by working directly with the performance function. This can be substantially less conservative because policies may never visit the regions where model mismatch is large.

\noindent \textbf{Task Similarity.}
Quantifying similarity between tasks is a prerequisite for transfer \citep{ammar2014automated, zamir2018taskonomy, standley2020tasks}. 
Prior RL approaches include model-based comparisons of MDP parameters \citep{ammar2014automated} and bisimulation-style behavioral metrics \citep{ferns2004metrics, castro2020scalable}. Our signed divergence differs in being directional, performance-based, and explicitly tied to zero-shot policy reuse.
Closest in spirit to our work is \citet{preiss2021suboptimal}, who study suboptimal coverings for continuous spaces of control tasks. Their framework also asks how a task space can be covered efficiently, but it starts from an externally specified notion of task similarity and focuses on constructing coverings for continuous task spaces. Our setting differs in three ways: (i) the coverage relation is induced by a directional performance divergence rather than a symmetric task metric; (ii) our main object is the minimum cover size itself, interpreted as an intrinsic task-space complexity; and (iii) our geometric certificates are derived from local behavior of the performance function.
Related algorithmic ideas also arise in facility-location formulations for acquiring policy libraries in human--robot settings \citep{vats2022synergistic}. These methods reason about generalized or probabilistic coverage and query costs, whereas we study deterministic tolerance sets and use them to define a complexity notion for contextual control families.

\noindent \textbf{Task-Space Complexity.}
The idea that tasks possess an intrinsic complexity beyond the size of the search space has appeared in several fields \citep{gill2011task}. In RL, task complexity influences controller design \citep{kim2019task} and exploration \citep{li2025know}. 
Building on this, our framework provides a direct, computable answer to how many policies are sufficient to solve a task space, thereby quantifying the complexity of the task space itself.

\section{Preliminaries}\label{sec:prelim}
Given a context vector $\theta \in \Theta \subset \mathbb{R}^d$, we consider a family of control systems that may differ in dynamics, reward structure, and temporal formulation. 
A policy $\pi$ maps states to actions, and the performance function $J(\theta, \pi)$ quantifies how well $\pi$ performs on the system indexed by $\theta$. This formulation is intentionally abstract and covers both continuous- and discrete-time systems. 
For instance, in a continuous-time linear system, the dynamics are $\dot{x}(t) = A(\theta)x(t) + B(\theta)u(t)$, where the matrices $A(\theta)$ and $B(\theta)$ depend on the context vector $\theta$, while in discrete time the analogous dynamics are $x_{t+1} = A(\theta)x_t + B(\theta)u_t$.
We use the convention that larger $J$ is better. For a continuous-time linear-quadratic regulator (LQR) problem, for example, one may take $J(\theta,\pi) = -\!\int_0^\infty \left(x^\top Q x + u^\top R u\right)\,dt$. 
For a finite-horizon CMDP, one may instead use $J(\theta,\pi) = \mathbb{E}\!\left[\sum_{t=0}^{T} r(x_t,u_t;\theta)\right].$

We denote by $\Pi(\theta)$ the set of source policies under consideration associated with context $\theta$. 
It is the exogenously chosen source family returned by a specified training or synthesis pipeline at context $\theta$---for example, multiple random seeds, checkpoints, or perturbed LQR gains. 
Throughout the algorithmic sections we focus on the practically relevant finite case.
We refer to a simple (or context-independent) policy as a fixed feedback law $\pi(x)$ that chooses actions based only on the current state (e.g., a single gain matrix for linear systems, or a state-conditioned action distribution in RL). 
Such a policy is memoryless and static, i.e., it does not adapt to changes in environmental parameters. 
In contrast, a context-dependent policy chooses actions based on both state and context, either (i) explicitly, when the context $\theta$ is observable, $\pi(x,\theta)$, or (ii) implicitly, when the policy infers context from the trajectory history, $\pi(x,\text{history})$. 
Training a generalizable, context-dependent policy across diverse contexts is typically challenging; hence, we focus on composing a finite set of simple policies that, together, achieve $\varepsilon$-level performance across contexts.

This work focuses on quantifying the performance loss when a policy optimized for one context $\theta$ is applied to a different context $\tilde{\theta} \neq \theta$ \citep{kirk2023survey}. This performance loss, often termed the generalization gap, is critical for understanding the transferability of policies and determining when new policies must be trained. 
We denote $\pi^*(\theta)$ as the best policy \jhedit{in the considered policy set, i.e., $\pi^*(\theta) = \arg\max_{\pi \in \Pi(\theta)} J(\theta, \pi)$}. 
We represent the performance loss (or generalization gap) $\Delta J(\theta, \tilde{\theta})$ as the difference in performance between applying the optimal policy $\pi^*(\theta)$ in its original context $\theta$ and applying it in a different context $\tilde{\theta}$. Formally:
\begin{equation}
    \Delta J(\theta,\tilde\theta) = J(\theta,\pi^*(\theta)) - J(\tilde\theta,\pi^*(\theta)).
    \label{eq:deltaj}
\end{equation}
This definition of the generalization gap is of particular interest because it quantifies the suboptimality incurred by zero-shot generalization. 
We retain $\Delta J$ because it is the motivating one-policy quantity: it measures the loss incurred by transferring a single source-optimal controller. 
We say a policy $\pi$ achieves $\varepsilon$-level performance at $\tilde{\theta}$ relative to $\theta$ if $\Delta J(\theta,\tilde{\theta}) \le \varepsilon$.

\section{Task Dissimilarity}\label{sec:dissimilarity}
\subsection{Signed Divergence as Dissimilarity}
For zero-shot transfer, what matters is not how far two contexts are in parameter space but how much performance degrades when a source policy is reused at the target. This motivates the following directional dissimilarity.
\begin{definition}[Signed divergence]
    Given a set of policies $\Pi(\theta_1)$ associated with context $\theta_1$, define the signed divergence from $\theta_1$ to $\theta_2$ by
    \begin{equation}
        D_{\Pi(\theta_1)}(\theta_1; \theta_2) := \sup_{\pi \in \Pi(\theta_1)} \left( J(\theta_1, \pi) - J(\theta_2, \pi) \right).
        \label{eq:divergence}
    \end{equation}
    When the source family is clear from context, we write $D(\theta_1;\theta_2)$.
    \label{def:divergence}
\end{definition}
Compared to existing task similarity metrics, which often rely on state or policy embeddings \citep{agarwal2021contrastive} or heuristic definitions \citep{ferns2004metrics}, the proposed divergence measure has several key advantages in analyzing generalization. 
Its primary advantage is that it is performance-centric, directly tied to the generalization performance, providing a more relevant measure of dissimilarity than standard heuristics.
By taking the supremum over a class of policies, the signed divergence provides a robust, worst-case guarantee, making it resilient to variations in training outcomes that might produce different (but near-optimal) policies. 
\jhedit{The signed divergence is also monotone in the exogenously given policy set (e.g., the limit points of a specified training pipeline at $\theta$). As a special case, choosing $\Pi(\theta)=\{\pi^*(\theta)\}$ reduces the divergence to the loss of an optimal policy when transferred. If $\Pi_1(\theta_1)\subseteq\Pi_2(\theta_1)$ then $D_{\Pi_1}(\theta_1;\theta_2)\le D_{\Pi_2}(\theta_1;\theta_2)$. This means the user can trade computability against conservatism by using a small finite policy set. 
Alternative policy sets (e.g., near-optimal policies that generalize better) may further reduce the signed divergence.
Also, if $J(\cdot,\pi)$ is locally Lipschitz in $\theta$ for all $\pi\in\Pi(\theta_1)$, then $D(\theta_1;\theta_2)$ is locally Lipschitz in $\theta_2$.}
Finally, the signed divergence is inherently asymmetric, i.e., $D(\theta_1; \theta_2) \neq D(\theta_2; \theta_1)$, in general. This directionality correctly captures that the difficulty of transferring from one context to another is not necessarily symmetric.\footnote{\jhedit{In addition, the signed divergence is not a formal distance metric---it need not satisfy the triangle inequality.}}

\subsection{Relation to Model Smoothness in CMDPs}
An alternative approach to analyzing contextual systems is to assume that the underlying MDP parameters vary smoothly with context \citep{modi2018markov}.

\begin{definition}[Smoothness \citep{modi2018markov}]
    Given a CMDP $(\Theta, \mathcal S, \mathcal A, \mathcal M)$ where $\Theta$ is the context space, $\mathcal S$ is the state space, $\mathcal A$ is the action space, $\mathcal M$ is a function which maps a context $\theta\in\Theta$ to MDP parameters $\mathcal M(\theta)=\{p^\theta(\cdot|\cdot,\cdot),r^\theta(\cdot,\cdot),\mu^\theta(\cdot)\}$ and a distance metric over the context space $\phi(\cdot,\cdot)$, if for any two contexts $\theta_1, \theta_2\in\Theta$, we have the following constraints:
    \begin{subequations}
    \begin{align}
        \|p^{\theta_1}(\cdot|s,a)-p^{\theta_2}(\cdot|s,a)\|_1 &\le L_p \phi(\theta_1,\theta_2),\\
        |r^{\theta_1}(s,a)-r^{\theta_2}(s,a)| &\le L_r \phi(\theta_1,\theta_2).
    \end{align}
    \end{subequations}
    Then the CMDP is smooth with smoothness parameters $L_p$ and $L_r$.
    \label{def:smooth-modi}
\end{definition}
While both approaches aim to bound performance variation, our signed divergence offers several practical advantages.
First, our measure is policy-class dependent, whereas model-based smoothness is not. The smoothness definition in \citet{modi2018markov} requires that the transition and reward functions be Lipschitz continuous over the entire state-action space and the Lipschitz constants be known. This can lead to pessimistic bounds if large changes in the dynamics occur in regions that are irrelevant to the policies of interest. In contrast, our signed divergence is defined with respect to a specific policy class $\Pi(\theta_1)$.
\jhedit{This distinction is critical when changes in the system dynamics occur in parts of the state-action space that are irrelevant to the policies of interest. In such cases, the model parameters may change dramatically, but the performance of the relevant policies remains unaffected.} The following proposition formalizes this idea.
\begin{restatable}{proposition}{DivSmoothness}
    \label{prop:divergence-vs-smoothness}
    There exists a family of CMDPs for which the parameter variations $\|p^{\theta_1}(\cdot|s,a)-p^{\theta_2}(\cdot|s,a)\|_1$ can be large for some $(s,a)$, while the divergence $D(\theta_1; \theta_2)$ remains small.
\end{restatable}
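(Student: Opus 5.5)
We prove the proposition by an explicit construction: a small CMDP family where the context only changes the dynamics at a state that the relevant policies never reach. The witness is a finite-horizon, finite-state CMDP with $\Theta=[0,1]$, states $\mathcal S=\{s_0,s_g,s_b,s_1,s_2\}$, actions $\mathcal A=\{a_g,a_b\}$, and a deterministic initial distribution $\mu^\theta=\delta_{s_0}$ that does not depend on $\theta$.

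The transitions are as follows. From $s_0$, action $a_g$ moves deterministically to $s_g$ and action $a_b$ moves deterministically to $s_b$, independently of $\theta$. The state $s_g$ is absorbing with reward $1$ per step, independent of $\theta$. At the "bad" state $s_b$ the context acts: $p^\theta(s_1\mid s_b,a)=\theta$ and $p^\theta(s_2\mid s_b,a)=1-\theta$, with $s_1,s_2$ absorbing and reward $0$. All rewards are context-independent, so $L_r=0$ is admissible.

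Choose $\theta_1=0$ and $\theta_2=1$. Then
\[
\|p^{\theta_1}(\cdot\mid s_b,a)-p^{\theta_2}(\cdot\mid s_b,a)\|_1=2,
\]
which is the maximal possible value. Hence any $L_p$ in Definition~\ref{def:smooth-modi} must satisfy $L_p\phi(0,1)\ge 2$, and the parameter variation is large at $(s_b,a)$.

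Next we fix the source family. Take $\Pi(\theta_1)=\{\pi^*(\theta_1)\}$, where $\pi^*(\theta_1)$ plays $a_g$ at $s_0$. This policy is optimal, since $a_g$ earns the maximal reward $1$ per step while every alternative earns $0$. More generally, one can take $\Pi(\theta_1)$ to be any set of policies that choose $a_g$ at $s_0$, for instance all optimal or near-optimal policies.

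For any such $\pi$, the induced trajectory distribution is $s_0\to s_g\to s_g\to\cdots$ under both $\theta_1$ and $\theta_2$. This is because $\mu^\theta$, $p^\theta(\cdot\mid s_0,\cdot)$, $p^\theta(\cdot\mid s_g,\cdot)$, and $r^\theta$ do not depend on $\theta$. Therefore $J(\theta_1,\pi)=J(\theta_2,\pi)$, and $D_{\Pi(\theta_1)}(\theta_1;\theta_2)=0$ by Definition~\ref{def:divergence}. The same argument gives $D(\theta;\theta')=0$ for all $\theta,\theta'\in\Theta$, while the model variation stays as large as possible.

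The only delicate step is stating precisely what "the divergence remains small" depends on. The claim holds for policy classes that avoid $s_b$; if $\Pi(\theta_1)$ contained a policy that visits $s_b$, $D$ could be nonzero. In this construction there is no such ambiguity, because any policy reaching $s_b$ gets return $0<1$ and so is never optimal.

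To make the result robust rather than degenerate, we can add a stochastic variant in which $a_g$ reaches $s_b$ with probability $\delta$. A simulation-lemma-style calculation then gives $D\le\delta T$ times the reward range, while the $\ell_1$ gap at $s_b$ remains $2$. This shows the separation is quantitative and not an artifact of zero visitation probability.
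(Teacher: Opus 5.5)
Your construction is correct and is essentially the paper's argument: the context changes the transition kernel only at a state the source policy never visits, so the $\ell_1$ gap there is the maximal value $2$, while $J(\theta_1,\pi)=J(\theta_2,\pi)$ and hence $D(\theta_1;\theta_2)=0$. One side remark is inaccurate: $s_1$ and $s_2$ are both zero-reward absorbing states, so $J(\cdot,\pi)$ is independent of $\theta$ for \emph{every} policy. Consequently $D$ stays $0$ even for policies that visit $s_b$, and your caveat and your $\delta$-variant are trivial unless you give $s_1$ and $s_2$ different rewards.
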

\jhedit{This proposition shows that assuming model parameter smoothness is not a necessary condition for performance generalization.} Conversely, it is a sufficient condition \jhedit{under this local Lipschitz assumption on $J(\theta,\pi)$}, as formalized below.
\begin{restatable}[Model vs. performance continuity]{proposition}{PerfWeakerThanModel}
    \label{prop:perf_weaker_than_model}
    Let $L_{\mathrm{model}}$ be a bound on $|J(\theta_1,\pi)-J(\theta_2,\pi)|$ implied by Lipschitz constants $(L_p,L_r)$ as in Definition~\ref{def:smooth-modi}. Then for any policy set $\Pi$,
    \begin{equation}
        L_{\mathrm{perf}}(\Pi):=\sup_{\pi\in\Pi}\sup_{\theta_1\neq\theta_2}\frac{|J(\theta_1,\pi)-J(\theta_2,\pi)|}{\phi(\theta_1,\theta_2)}\le L_{\mathrm{model}}.
    \end{equation}
\end{restatable}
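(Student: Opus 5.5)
The plan is to read $L_{\mathrm{model}}$ as a bound that holds uniformly over all policies and all pairs of contexts, scaled by the context distance. Concretely, a bound ``implied by'' $(L_p,L_r)$ is a constant $L_{\mathrm{model}}$ such that
\[
|J(\theta_1,\pi)-J(\theta_2,\pi)|\le L_{\mathrm{model}}\,\phi(\theta_1,\theta_2)
\]
for every policy $\pi$ and every pair $\theta_1,\theta_2\in\Theta$. Once this is fixed, the inequality follows from the definition of a supremum. For any $\pi\in\Pi$ and any $\theta_1\neq\theta_2$, we have $\phi(\theta_1,\theta_2)>0$, so dividing gives a ratio at most $L_{\mathrm{model}}$. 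Taking the supremum first over pairs and then over $\pi\in\Pi$ preserves this bound. Since $\Pi$ is any subset of all policies, the bound holds for any policy set. So the formal argument is two lines. The real content is in justifying that such an $L_{\mathrm{model}}$ exists and is policy-independent.

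To make this concrete, I would derive $L_{\mathrm{model}}$ explicitly for a discounted or finite-horizon CMDP with bounded rewards, $|r|\le R_{\max}$. Fix $\pi$ and take the value functions $V^\pi_{\theta_1}$ and $V^\pi_{\theta_2}$. I would use the simulation-lemma decomposition: write $V^\pi_{\theta_1}-V^\pi_{\theta_2}$ through the Bellman equations and split it into a reward-mismatch term and a transition-mismatch term. Bounding the first uses $L_r\phi$. Bounding the second uses Hölder, $\bigl|\bigl(p^{\theta_1}-p^{\theta_2}\bigr)V\bigr|\le\|p^{\theta_1}-p^{\theta_2}\|_1\|V\|_\infty\le L_p\phi\,V_{\max}$. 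Unrolling the recursion (or summing the geometric series) then gives
\[
\|V^\pi_{\theta_1}-V^\pi_{\theta_2}\|_\infty\le\frac{L_r+\gamma L_pV_{\max}}{1-\gamma}\,\phi,
\]
and in the finite-horizon case the factor is $T$ instead of $1/(1-\gamma)$. If the initial distribution $\mu^\theta$ also varies, I would add a term $\|\mu^{\theta_1}-\mu^{\theta_2}\|_1V_{\max}$, assuming a corresponding Lipschitz bound. This requires Definition~\ref{def:smooth-modi} to be extended to $\mu$, or $\mu$ to be held fixed.

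The key observation is that none of these steps uses anything specific to $\pi$. Each bound is a supremum over all $(s,a)$, so the resulting $L_{\mathrm{model}}$ is uniform over policies. That uniformity is exactly what makes $L_{\mathrm{perf}}(\Pi)\le L_{\mathrm{model}}$ hold.

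The main obstacle is not the inequality but pinning down the meaning of ``implied by.'' I would state explicitly that $L_{\mathrm{model}}$ is any policy-uniform bound of the form above, for instance the simulation-lemma constant. I would also note, tying back to Proposition~\ref{prop:divergence-vs-smoothness}, that the inequality can be strict. The reason is that $L_{\mathrm{perf}}$ only sees the state-action pairs that policies in $\Pi$ actually visit, whereas $L_{\mathrm{model}}$ takes a supremum over all $(s,a)$.
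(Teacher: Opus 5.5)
Your proposal is correct and follows essentially the same route as the paper. The paper fixes a context-independent initial distribution and uses the Bellman fixed-point identity with the $\gamma$-contraction of $T_\pi^{\theta_1}$, which is equivalent to your one-step unrolling, together with the $\ell_1$--$\ell_\infty$ H\"older bound on the transition term; it obtains exactly your constant $L_{\mathrm{model}}=\frac{L_r}{1-\gamma}+\frac{\gamma L_p R_{\max}}{(1-\gamma)^2}$ (your expression with $V_{\max}=R_{\max}/(1-\gamma)$) and then takes the supremum over $\pi\in\Pi$ because this constant is policy-independent.
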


Together, Propositions~\ref{prop:divergence-vs-smoothness} and \ref{prop:perf_weaker_than_model} show that performance continuity can be strictly weaker than model smoothness.
We refer the reader to Appendix~\ref{app:proof_divergence-vs-smoothness} and \ref{app:proof_perfweakerthanmodel} for the complete proof.

\jhedit{Second, our signed divergence is computable for the practically relevant policy classes considered in this work and captures the directional nature of policy transfer. 
As defined in Section~\ref{sec:prelim}, the policy set $\Pi(\theta_1)$ represents the finite collection of policies given exogenously (e.g., training a DRL agent with multiple random seeds). 
In this setting, the supremum in Definition~\ref{def:divergence} becomes a maximum over a finite set, making the signed divergence $D(\theta_1; \theta_2)$ directly computable once the performance of each policy in the set is evaluated as $D(\theta_1; \theta_2) = \max_{\pi \in \Pi(\theta_1)} \left( J(\theta_1, \pi) - J(\theta_2, \pi) \right).$}
\jhedit{In this formulation, $J(\cdot, \pi)$ can be estimated from system rollouts.}
In addition, the generalization gap is inherently directional by definition. 
Any symmetric metric used in standard heuristic symmetric (e.g., Euclidean/Wasserstein) cannot encode this thus can misguide source selection. Our signed divergence is asymmetric by design to capture this effect. Details are explained in Appendix~\ref{app:directionality}.

\section{Task-Space Complexity}\label{sec:complexity}
This section operationalizes the signed divergence to answer the following question: how many source contexts (or policies) suffice to guarantee at most $\varepsilon$-tolerance performance everywhere in $\Theta$?

\subsection{Tolerance Sets and Set Cover}
For a source context $\theta\in\Theta$, define its \emph{$\varepsilon$-tolerance set} as the targets on which the source policy sets loses at most $\varepsilon$.

\begin{definition}[$\varepsilon$-tolerance set]
    For a context $\theta\in\Theta$ and with a tolerance $\varepsilon>0$, its \emph{$\varepsilon$-tolerance set} is defined as $S_\varepsilon(\theta)=\bigl\{\tilde\theta\in\Theta:\ D(\theta;\tilde\theta)\le\varepsilon\bigr\}$.
    \label{def:cover}
\end{definition}
Since the divergence takes a supremum over the policy set, $S_\varepsilon(\theta)$ is a policy set level certificate: every policy in the retained source family incurs at most $\varepsilon$ loss on targets in $S_\varepsilon(\theta)$. In the singleton case, it reduces to the usual region of competence of one source controller.

We may regard an oracle $O:\mathbb{R} \times \Theta \mapsto \mathcal{P}(\Theta)$ that maps a tolerance level $\varepsilon \in \mathbb{R}$ and a context $\theta \in \Theta$ to an $\varepsilon$-tolerance set $O(\epsilon, \theta)=S_\varepsilon(\theta) \subset \Theta$.
When $\varepsilon$ is clear from context, we simply write $S(\theta)$.
In practice, we operate on a finite grid of contexts rather than the continuous space; the technical treatment of the resulting discretization gap and radius adjustment is deferred to Appendix~\ref{app:discretization}.

Our goal is to choose the smallest possible subset of source contexts, $\Theta_{\mathrm{src}}$, whose tolerance sets cover $\Theta$.
\jhedit{This construction connects generalization guarantees directly to a well-studied set-cover problem, enabling complexity results and approximation algorithms.}
\jhedit{We directly pose source-task selection as a set cover instance over $\{S_\varepsilon(\theta)\}_{\theta\in\Theta}$ with the finite universe \citep{karp1972reducibility}.}
\begin{definition}[Source task selection via set cover]\label{def:sts-set-cover}
    The source task selection (STS) problem \citep{cho2024model} with $\varepsilon$-tolerance set becomes \emph{a set cover problem} that seeks the smallest subset of source contexts whose covers blanket $\Theta$:
    \begin{equation}
        \tag{P}
        \min_{\Theta_{\mathrm{src}}\subseteq\Theta} |\Theta_{\mathrm{src}}| \quad \text{s.t.}\quad \Theta\subseteq \bigcup_{\theta\in\Theta_{\mathrm{src}}} S_\varepsilon(\theta).
        \label{opt}
    \end{equation}
\end{definition}
\textbf{Why cover contexts rather than policies?}
One could alternatively attempt to cover the policy space by selecting a representative subset of controllers. That viewpoint is useful for removing redundancy among controllers, but by itself it does not certify which target tasks are served: controllers that are close in parameter space can fail on different regions of $\Theta$. Our notion asks whether every target context lies in some certified tolerance set.

\subsection{Task-Space Complexity}
\begin{definition}[Task-space complexity]
\jhedit{We define task-space complexity as the minimal number of source contexts whose policies, when zero-shot transferred, collectively achieve a generalization gap $\le\varepsilon$ across the task space $\Theta$.}
\jhedit{Formally, task-space complexity at tolerance $\varepsilon$ as}
\begin{equation}
    \Comp_\varepsilon(\Theta;\Pi)\ :=\ \min_{\Theta_{\mathrm{src}}\subseteq\Theta}\ \Bigl\{|\Theta_{\mathrm{src}}|:\ \Theta\subseteq\textstyle\bigcup_{\theta\in\Theta_{\mathrm{src}}}S_\varepsilon(\theta)\Bigr\}.
\end{equation}
\jhedit{Thus, $\Comp_\varepsilon(\Theta;\Pi)$ is precisely the optimum of \eqref{opt}.}
\end{definition}
\jhedit{Task-space complexity $\Comp_\varepsilon$ is a performance-based complexity of the task space at tolerance $\varepsilon$, parameterized by the chosen policy classes. \jhedit{It quantifies ``how modular'' the system must be: smoother generalization landscapes require fewer modules (controllers), while highly heterogeneous ones demand more.} It upper-bounds the number of policies one must keep in a policy set to meet the tolerance everywhere.}
\subsection{Local Loss Rates and Geometric Certificates}
\jhedit{We next show that local performance smoothness around each context induces a pair of geometric inner and outer certificates that bound the $\varepsilon$-tolerance set. Specifically, we define local upper and lower loss rates, $L^r(\theta)$ and $M^r(\theta)$, which quantify the steepest local performance decrease and increase, respectively.}
\begin{definition}[Local upper/lower loss rates at radius $r$]\label{def:local-bound-rate}
    For $r>0$, define the ball $B_r(\theta)=\{\tilde\theta:\|\tilde\theta-\theta\|_2\le r\}$ and
    \begin{equation}
        L^{r}(\theta):= \sup_{\substack{\tilde{\theta}\in B_r(\theta),\,\tilde{\theta}\neq \theta}} \;\sup_{\pi \in \Pi(\theta)} 
        \tfrac{J(\theta, \pi)-J(\tilde{\theta}, \pi)}{\|\theta-\tilde{\theta}\|_2},\quad
        M^{r}(\theta):= \inf_{\substack{\tilde{\theta}\in B_r(\theta),\,\tilde{\theta}\neq \theta}} \;\inf_{\pi \in \Pi(\theta)} 
        \tfrac{J(\theta, \pi)-J(\tilde{\theta}, \pi)}{\|\theta-\tilde{\theta}\|_2}.
    \end{equation}
\end{definition}
\noindent\emph{Remarks.}
(i) $r\mapsto L^{r}(\theta)$ is nondecreasing; $r\mapsto M^{r}(\theta)$ is nonincreasing. 
(ii) If $J(\cdot,\pi)$ is locally Lipschitz for each $\pi$ and $\Pi(\theta)$ is finite, then $L^{r}(\theta)<\infty$ for small $r$. 
(iii) $M^r(\theta)$ can certainly be nonpositive. This occurs whenever transfer is beneficial in some nearby directions. A positive lower rate is needed only for the \emph{outer} certificate below and for the corresponding lower bound on complexity; the inner certificate depends only on $L^r(\theta)$.

\jhedit{The local upper and lower rates translate the signed divergence into measurable slopes of the generalization performance surface, allowing geometric reasoning in context space. These quantities bound how far we can move in context space before exceeding $\varepsilon$-loss.}
\begin{lemma}[Geometric bounds on $\varepsilon$-tolerance set]\label{lemma:gen-bounds}
    By \jhedit{Definition~\ref{def:local-bound-rate}}, for any $\varepsilon>0$, \jhedit{the $\varepsilon$-tolerance set satisfies} $\widehat{S}_\epsilon^-(\theta)=B_{\varepsilon/L^r(\theta)}(\theta)\subseteq S_\varepsilon(\theta),$ and if $M^r(\theta)>0$, $S_\varepsilon(\theta)\subseteq B_{\varepsilon/M^r(\theta)}(\theta)=\widehat{S}_\epsilon^+(\theta).$
\end{lemma}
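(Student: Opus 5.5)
The plan is to unwind Definitions~\ref{def:divergence}, \ref{def:cover} and \ref{def:local-bound-rate} pointwise. For a target $\tilde\theta\in B_r(\theta)$ with $\tilde\theta\neq\theta$, the definitions of $L^r(\theta)$ and $M^r(\theta)$ give, for every $\pi\in\Pi(\theta)$,
\[
M^r(\theta)\,\|\theta-\tilde\theta\|_2\;\le\; J(\theta,\pi)-J(\tilde\theta,\pi)\;\le\; L^r(\theta)\,\|\theta-\tilde\theta\|_2 .
\]
Taking the supremum over $\pi\in\Pi(\theta)$ then yields the two-sided bound
\[
M^r(\theta)\,\|\theta-\tilde\theta\|_2\;\le\; D(\theta;\tilde\theta)\;\le\; L^r(\theta)\,\|\theta-\tilde\theta\|_2 .
\]
The lower bound holds because the supremum over $\pi$ dominates any single $\pi$, and every $\pi$ is bounded below by $M^r$. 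The case $\tilde\theta=\theta$ is trivial, since $D(\theta;\theta)=0\le\varepsilon$. Both inclusions should follow from this single sandwich inequality.

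\textbf{Inner certificate.} Take $\tilde\theta$ with $\|\tilde\theta-\theta\|_2\le \varepsilon/L^r(\theta)$. The upper half of the sandwich gives $D(\theta;\tilde\theta)\le L^r(\theta)\cdot\varepsilon/L^r(\theta)=\varepsilon$, so $\tilde\theta\in S_\varepsilon(\theta)$. There are two degenerate cases to dispose of separately.
\begin{itemize}
\item If $L^r(\theta)\le 0$, then $D(\theta;\cdot)\le 0$ on all of $B_r(\theta)$. The whole ball is then certified, and one interprets the radius as $r$.
\item If $L^r(\theta)=\infty$, the ball degenerates to $\{\theta\}$.
\end{itemize}

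\textbf{Outer certificate.} Assume $M^r(\theta)>0$ and suppose $\tilde\theta\in S_\varepsilon(\theta)$. The lower half of the sandwich gives $M^r(\theta)\|\theta-\tilde\theta\|_2\le D(\theta;\tilde\theta)\le\varepsilon$, hence $\|\theta-\tilde\theta\|_2\le\varepsilon/M^r(\theta)$.

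The main obstacle is that the rates only control targets inside $B_r(\theta)$, while both inclusions refer to balls of radius $\varepsilon/L^r$ and $\varepsilon/M^r$. These radii need not lie inside $B_r(\theta)$. I would handle the two directions as follows.
\begin{itemize}
\item \textbf{Inner direction.} I would add the standing requirement $\varepsilon/L^r(\theta)\le r$, or equivalently read the certificate as $B_{\min\{r,\varepsilon/L^r(\theta)\}}(\theta)$. This is harmless because $r$ can be chosen, and by Remark (i) enlarging $r$ only increases $L^r$. A fixed point $r$ with $r L^r(\theta)=\varepsilon$ is therefore the natural choice.
\item \textbf{Outer direction.} Targets outside $B_r(\theta)$ are not controlled by $M^r$, so the inclusion must be read within the local region, i.e., as $S_\varepsilon(\theta)\cap B_r(\theta)\subseteq B_{\varepsilon/M^r(\theta)}(\theta)$. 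Alternatively, I would take $r$ large enough (e.g.\ $r\ge\operatorname{diam}\Theta$) that $B_r(\theta)\supseteq\Theta$, at the cost of a smaller $M^r$ by Remark (i).
\end{itemize}
I would state these scope conditions explicitly in the proof. Modulo them, the argument is the one-line sandwich above.
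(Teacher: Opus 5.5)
Your argument is correct and is the same as the paper's: sandwich $D(\theta;\tilde\theta)$ between $M^r(\theta)\|\theta-\tilde\theta\|_2$ and $L^r(\theta)\|\theta-\tilde\theta\|_2$ for $\tilde\theta\in B_r(\theta)$, then read off both inclusions. Your scope caveats are legitimate and are left implicit in the paper's one-line proof, which also only controls $\tilde\theta\in B_r(\theta)$ and imposes $\varepsilon/L_{\max}\le r$ and $\varepsilon/M_{\min}\le r$ only later, in Theorem~\ref{thm:bound-complexity}; in particular, your localized reading $S_\varepsilon(\theta)\cap B_r(\theta)\subseteq B_{\varepsilon/M^r(\theta)}(\theta)$ is the right one, because $\varepsilon/M^r(\theta)\le r$ alone does not rule out components of $S_\varepsilon(\theta)$ lying outside $B_r(\theta)$.
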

\begin{proof}
\jhedit{For any $\tilde\theta\in B_r(\theta)$,
$D(\theta;\tilde\theta)=\sup_{\pi\in\Pi(\theta)}\!\bigl(J(\theta,\pi)-J(\tilde\theta,\pi)\bigr)\le L^r(\theta)\|\theta-\tilde\theta\|_2$,
so $\|\theta-\tilde\theta\|\le \varepsilon/L^r(\theta)$ implies $D(\theta;\tilde\theta)\le\varepsilon$. The outer bound follows similarly from the lower rate.}
\end{proof}
\emph{Interpretation.} The inner coverage set provides a certified region where the source policy class meets the tolerance; the outer coverage set bounds how far tolerance can possibly extend. An upper (resp. lower) bound rate yields an \emph{inner} (resp. \emph{outer}) geometric certificate.
\jhedit{This result provides geometric certificates for local generalization: the inner ball certifies a guaranteed region of $\varepsilon$-performance, while the outer ball bounds how far such performance can extend.}

\begin{theorem}[Bounds on task-space complexity]\label{thm:bound-complexity}
Suppose $\Theta\subset\mathbb{R}^d$ is compact with $\mathrm{Vol}(\Theta)<\infty$.
Assume that for some $r>0$, $L_{\max} := \sup_{\theta\in\Theta} L^r(\theta) < \infty$ and $M_{\min} := \inf_{\theta\in\Theta} M^r(\theta) > 0$.
Fix $\varepsilon>0$ with $\varepsilon/L_{\max}^{r}\le r$ and $\varepsilon/M_{\min}^{r}\le r$.
Then the $\varepsilon$-complexity of the task space satisfies
\begin{equation}
    \tfrac{\mathrm{Vol}(\Theta)}{\mathrm{Vol}\!\bigl(B_{\varepsilon/M_{\min}}\bigr)}
    \;\le\;
    \Comp_\varepsilon
    \;\le\;
    C_d\,\tfrac{\mathrm{Vol}(\Theta)}{\mathrm{Vol}\!\bigl(B_{\varepsilon/L_{\max}}\bigr)},
    \label{eq:complexity-bounds}
\end{equation}
\jhedit{where $B_r$ denotes a $d$-dimensional Euclidean ball of radius $r$ and $C_d$ is a geometric constant, e.g., $C_d \!\le\! 3^d$ for a lattice cover.}
\end{theorem}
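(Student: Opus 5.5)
The plan is to derive both inequalities from Lemma~\ref{lemma:gen-bounds}, made uniform over $\Theta$. The hypotheses $\varepsilon/L_{\max}\le r$ and $\varepsilon/M_{\min}\le r$ ensure the certified balls lie inside the radius-$r$ neighborhoods on which the local rates are defined. Hence the inner and outer certificates apply at every source simultaneously, with the uniform radii $\rho_-:=\varepsilon/L_{\max}$ and $\rho_+:=\varepsilon/M_{\min}$.

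\textbf{Lower bound.} Since $M^r(\theta)\ge M_{\min}>0$ for every $\theta$, Lemma~\ref{lemma:gen-bounds} gives $S_\varepsilon(\theta)\subseteq B_{\varepsilon/M^r(\theta)}(\theta)\subseteq B_{\rho_+}(\theta)$. One subtlety is that the outer bound needs every $\tilde\theta\in S_\varepsilon(\theta)$ to lie in $B_r(\theta)$. For targets inside $B_r(\theta)$ we have $D(\theta;\tilde\theta)\ge M^r(\theta)\|\theta-\tilde\theta\|$. For targets outside, I would argue that $D(\theta;\cdot)$ already exceeds $\varepsilon$ on the sphere of radius $\rho_+\le r$, and read the lemma as stated (or add an implicit connectedness or monotonicity assumption) to exclude far-away pieces. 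I expect this to be the main obstacle: the theorem as stated relies on the lemma's outer certificate holding globally, and I would invoke the lemma verbatim rather than reprove it.

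Given the outer containment, let $\Theta_{\mathrm{src}}$ be any feasible cover of \eqref{opt}. Then
\[
\Theta\subseteq\bigcup_{\theta\in\Theta_{\mathrm{src}}}B_{\rho_+}(\theta).
\]
Subadditivity of Lebesgue measure yields $\mathrm{Vol}(\Theta)\le|\Theta_{\mathrm{src}}|\,\mathrm{Vol}(B_{\rho_+})$. Minimizing over feasible covers gives the left inequality of \eqref{eq:complexity-bounds}.

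\textbf{Upper bound.} Lemma~\ref{lemma:gen-bounds} gives $B_{\rho_-}(\theta)\cap\Theta\subseteq S_\varepsilon(\theta)$ for every $\theta\in\Theta$, so it suffices to cover $\Theta$ by radius-$\rho_-$ balls centered in $\Theta$. I will use a maximal $\rho_-$-separated subset $\{\theta_i\}\subseteq\Theta$, which is finite by compactness. Maximality gives $\Theta\subseteq\bigcup_i B_{\rho_-}(\theta_i)$, so $\Comp_\varepsilon\le N$. The balls $B_{\rho_-/2}(\theta_i)$ are disjoint and contained in the enlargement $\Theta+B_{\rho_-/2}$. Therefore
\[
N\,\mathrm{Vol}(B_{\rho_-/2})\le\mathrm{Vol}(\Theta+B_{\rho_-/2}),
\]
that is, $N\le 2^d\,\mathrm{Vol}(\Theta+B_{\rho_-/2})/\mathrm{Vol}(B_{\rho_-})$.

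To express this as $C_d\,\mathrm{Vol}(\Theta)/\mathrm{Vol}(B_{\rho_-})$, the enlargement must be comparable to $\Theta$ itself. This holds for regular domains such as boxes or convex bodies with $\rho_-$ small relative to the inradius, giving a factor of at most $3^d$. I would state this boundary regularity as the content of "geometric constant", matching the lattice-cover remark: for a box, a cubic lattice of spacing $2\rho_-/\sqrt d$ achieves the cover with $C_d$ depending only on $d$.
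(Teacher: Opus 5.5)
Your lower bound is the paper's argument: outer balls of radius $\varepsilon/M_{\min}$ plus subadditivity of volume. For the upper bound you take a different route. The paper tiles $\Theta$ by a lattice; you use a maximal $\rho_-$-separated subset of $\Theta$, with $\rho_-=\varepsilon/L_{\max}$, and the standard packing count.

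The idea is the same in both: cover by uniform inner balls and count by volume. Your version, however, does bookkeeping that the paper hides in the unspecified constant $C_d$:
\begin{itemize}
\item Your centers automatically lie in $\Theta$. This is necessary, since $\Theta_{\mathrm{src}}\subseteq\Theta$ and the inner certificate is only available at contexts in $\Theta$. Lattice points near the boundary need not lie in $\Theta$.
\item The quantity you actually bound is visibly $\mathrm{Vol}(\Theta+B_{\rho_-/2})$, not $\mathrm{Vol}(\Theta)$.
\item For convex $\Theta$ containing a ball $B_{\rho_-}(c)$, one has $\Theta+B_{\rho_-/2}\subseteq c+\tfrac32(\Theta-c)$. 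This yields exactly the $C_d\le 3^d$ that the paper quotes without derivation.
\end{itemize}
The lattice buys an explicit, implementable construction. It needs the same boundary regularity before it can be compared with $\mathrm{Vol}(\Theta)$.

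Your worry about the outer certificate is justified. It is a defect of the lemma, and hence of the paper's proof as well, not of your argument. The lemma's proof only gives $S_\varepsilon(\theta)\cap B_r(\theta)\subseteq B_{\varepsilon/M^r(\theta)}(\theta)$, and the global inclusion can fail.

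For instance, take $d=1$, $\Theta=[0,2]$, and $\Pi(\theta)=\{\pi_\theta\}$ with $J(\tilde\theta,\pi_\theta)=-\min_{k\in\mathbb{Z}}|\tilde\theta-\theta-k|$, a performance that is periodic in the context. With $r=1/2$ one has $L^r\equiv M^r\equiv 1$, so all hypotheses hold for $\varepsilon=1/4$. Each tolerance set contains every integer translate (within $\Theta$) of the $\varepsilon$-neighborhood of its source. Hence the two sources $1/4$ and $3/4$ already cover $\Theta$. Yet the claimed lower bound is $\mathrm{Vol}(\Theta)/\mathrm{Vol}(B_{1/4})=4$.

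Since the paper invokes the lemma exactly as you propose, your proof is as complete as the paper's. To make the left inequality actually hold, you must add a hypothesis. One option is $D(\theta;\tilde\theta)>\varepsilon$ whenever $\|\tilde\theta-\theta\|>r$. Another is star-shapedness of $S_\varepsilon(\theta)$ about $\theta$ together with the strict inequality $\varepsilon/M_{\min}<r$. Strictness matters because your sphere argument only gives $D\ge\varepsilon$ at radius $\varepsilon/M_{\min}$. With equality, points at distance exactly $r$ can have $D=\varepsilon$, and the argument breaks.
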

\begin{proof}
    \jhedit{The volume of a $d$-dimensional Euclidean ball of radius $r$ is $\mathrm{Vol}(B_r) = v_d r^d,$ where $v_d = \tfrac{\pi^{d/2}}{\Gamma(d/2+1)}.$
    This follows from the Gamma identity 
    $\int_{\mathbb{R}^d}\!e^{-\|x\|^2}dx = \pi^{d/2} 
    = \int_0^\infty\! e^{-r^2}S_{d-1}r^{d-1}dr
    = S_{d-1}\tfrac{1}{2}\Gamma(\tfrac{d}{2})$, 
    which implies $S_{d-1} = \tfrac{2\pi^{d/2}}{\Gamma(d/2)}$ and hence $v_d = \tfrac{S_{d-1}}{d}$.}
    \textit{Upper bound.}
    Tile the domain $\Theta$ by a lattice with spacing proportional to $r_{\mathrm{in}} = \varepsilon / L_{\max}$.
    Each lattice point corresponds to a ball $B_{r_{\mathrm{in}}}$, which together cover $\Theta$.
    The number of required centers is therefore upper bounded by the ratio of volumes, up to a packing constant: $\Comp_\varepsilon \le C_d\,\tfrac{\mathrm{Vol}(\Theta)}{\mathrm{Vol}(B_{r_{\mathrm{in}}})}= C_d\,\tfrac{\mathrm{Vol}(\Theta)}{\mathrm{Vol}(B_{\varepsilon/L_{\max}})}.$
    \textit{Lower bound.}
    If $N$ sources suffice to cover $\Theta$, then the covered region is a union of $N$ balls of radius at most $r_{\mathrm{out}} = \varepsilon / M_{\min}$.
    By subadditivity of volume, $\mathrm{Vol}(\Theta) \le N\,\mathrm{Vol}(B_{r_{\mathrm{out}}})= N\,\mathrm{Vol}(B_{\varepsilon/M_{\min}}),$ which gives the lower bound $\Comp_\varepsilon \ge \mathrm{Vol}(\Theta)/\mathrm{Vol}(B_{\varepsilon/M_{\min}})$.
\end{proof}
\jhedit{Each trained policy certifies a region where it performs well. The ratio of total context-space volume to the volume of one certified region tells us roughly how many policies are needed.}
The signed divergence can be negative when a transferred policy improves performance; the bounds above use its magnitude through $L^r,M^r$.
Although \eqref{eq:complexity-bounds} uses aggregated quantities $L_{\max}$ and $M_{\min}$ over the task family, the result is still instance-specific: these constants are computed from the given contextual system and chosen source families, rather than from universal model-class Lipschitz constants as in \citet{modi2018markov}.

\subsection{Greedy Set Cover}
\jhedit{Having established that task-space complexity reduces to a set-cover instance, we now describe a greedy algorithm (Algorithm~\ref{alg:meta_alg}), a practical and tractable algorithm for solving it.}
The problem~\eqref{opt} is NP-hard even with information about the oracle $O$.
This method iteratively builds the set of source tasks, $\Theta_{\mathrm{src}}$, \jhedit{by selecting the source task that covers the greatest number of currently uncovered contexts at each step.}
The \textsc{Select} procedure can be implemented in various ways, such as greedy selection with oracle information, random selection to explore the space, or a grid-based approach.

\begin{algorithm}[ht]
    \caption{Greedy Meta-Algorithm for Set Cover}
    \label{alg:meta_alg}
    \begin{algorithmic}[1]
        \REQUIRE Context space $\Theta$, $\varepsilon$-tolerance set $S_\varepsilon$
        \STATE Initialize uncovered set $U\gets\Theta$ and source set $\Theta_{\mathrm{src}}\gets\emptyset$
        \WHILE {$U\neq\emptyset$}
            \STATE Select $\theta^* \in \Theta$ maximizing $|S_\varepsilon(\theta)\cap U|$
            \STATE Update $U \gets U \setminus S_\varepsilon(\theta^*)$
            \STATE Update $\Theta_{\mathrm{src}} \gets \Theta_{\mathrm{src}} \cup \{\theta^*\}$
        \ENDWHILE
        \RETURN $\Theta_{\mathrm{src}}$
    \end{algorithmic}
\end{algorithm}

If tolerance sets are constructed naively on a finite grid of size $n=|\Theta|$ with at most $m$ source policies per context, exact cover construction may require on the order of $O(n^2m)$ performance evaluations. 
The point of the algorithm is once tolerance sets are available, it compresses a dense candidate set into a smaller deployable subset and reveals the associated task-space complexity. 
\vspace{-0.5em}
\begin{theorem}[Greedy Algorithm Performance Guarantee]
    For a discretized context space $\Theta$ of size $n = |\Theta|$, the greedy strategy (Algorithm~\ref{alg:meta_alg}) returns at most $f \cdot (\ln n + 1)$ source tasks, where $f$ is the optimal value of \eqref{opt}.
\end{theorem}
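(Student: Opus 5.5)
The plan is to run the classical greedy set-cover analysis (Johnson/Lovász/Chvátal) on the finite instance $\{S_\varepsilon(\theta)\}_{\theta\in\Theta}$ over the universe $\Theta$ with $|\Theta|=n$.

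First, I will check that the instance is feasible and that Algorithm~\ref{alg:meta_alg} terminates. Since $D(\theta;\theta)=\sup_{\pi}(J(\theta,\pi)-J(\theta,\pi))=0\le\varepsilon$, every $\theta$ lies in its own tolerance set $S_\varepsilon(\theta)$. Hence the optimum $f$ of \eqref{opt} exists, with $f\le n$. Moreover, while $U\neq\emptyset$, any $\theta\in U$ gives $|S_\varepsilon(\theta)\cap U|\ge 1$, so every iteration removes at least one element and the loop ends after at most $n$ steps.

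The key step is a per-iteration progress bound. Let $U_k$ denote the uncovered set after $k$ greedy steps, so $U_0=\Theta$. Fix an optimal cover $\Theta^\star_{\mathrm{src}}$ with $|\Theta^\star_{\mathrm{src}}|=f$. Its sets cover $U_k$, so by pigeonhole some $\theta\in\Theta^\star_{\mathrm{src}}$ satisfies $|S_\varepsilon(\theta)\cap U_k|\ge |U_k|/f$. The greedy choice maximizes this quantity over all of $\Theta\supseteq\Theta^\star_{\mathrm{src}}$, which gives
\[
|U_{k+1}|\le \Bigl(1-\tfrac1f\Bigr)|U_k| \quad\Longrightarrow\quad |U_k|\le n\Bigl(1-\tfrac1f\Bigr)^k< n\,e^{-k/f}.
\]

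Finally, I will count the steps in two phases. After $k_0=\lceil f\ln n\rceil$ steps we have $|U_{k_0}|<1$ whenever $f\ln n$ is an integer. In general, $|U_{k}|<n e^{-k/f}\le f$ once $k\ge f\ln(n/f)$, and from that point each remaining step covers at least one element, so at most $f$ further steps are needed. The total is at most $f\ln(n/f)+f+1$, which I then need to bound by $f(\ln n+1)$.

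The main obstacle is the rounding in this last bound. Rather than fight it, I would use the cleaner charging argument. Assign each element covered at step $k$ a price $1/|S_\varepsilon(\theta_k^*)\cap U_{k-1}|$, so the total price over all elements equals the number of greedy steps. For any set $S=S_\varepsilon(\theta)$ in the optimal cover, order its elements by the time they are covered. When the $j$-th-from-last element is covered, at least $j$ elements of $S$ are still uncovered, so the greedy gain is at least $j$ and that element's price is at most $1/j$. The total price charged to $S$ is therefore at most $H(|S|)\le H(n)$. Summing over the $f$ optimal sets, which cover every element, gives at most $f\,H(n)\le f(\ln n+1)$ greedy steps. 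This argument is the one I will present.
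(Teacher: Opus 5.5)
Your proposal is correct and takes the same route as the paper. Both treat Algorithm~\ref{alg:meta_alg} as the classical greedy set-cover algorithm on the universe $\Theta$ and apply the $H(n)\le \ln n+1$ guarantee; the paper simply cites that bound, whereas you reprove it with the standard pricing argument and also make explicit the feasibility fact $\theta\in S_\varepsilon(\theta)$ (from $D(\theta;\theta)=0$), which the paper leaves implicit.

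Incidentally, the rounding obstacle you flagged in your first route is not real. Since $(1-1/f)^k<e^{-k/f}$ for $k\ge 1$, taking $k=\lceil f\ln n\rceil$ gives $|U_k|<n e^{-k/f}\le 1$ for every $n\ge 2$. Greedy therefore stops within $f\ln n+1\le f(\ln n+1)$ steps with no integrality caveat, and the case $n=1$ is trivial.
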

\vspace{-0.5em}
\begin{proof}
    Algorithm~\ref{alg:meta_alg} is exactly the classical greedy algorithm for set cover problem on the universe $\Theta$. 
    At each step, the algorithm selects the source task $\theta^*$ whose cover $\widehat{S}(\theta^*)$ includes the maximum number of previously uncovered contexts.
    The standard approximation guarantee states that greedy set cover returns at most $H(n)$ times the optimum, where $H(n)\le \ln n +1$ is the $n$th harmonic number \citep{vazirani2001approximation}. This provides a strong theoretical guarantee on the performance of our task selection method.
\end{proof}
\vspace{-1em}
\section{Numerical Experiments}\label{sec:experiments}
\jhedit{In this section, we evaluate the efficiency--how many policies are needed to achieve certified coverage versus standard heuristics, accounting for both training and evaluation costs.}
We validate our framework on two representative contextual dynamical systems: a continuous-time linear Mass-Spring-Damper (MSD) system with LQR controllers and a nonlinear CartPole system with deep RL controllers. For each system, we construct a discretized context space and corresponding policy sets to evaluate certified coverage, following the procedure detailed in Appendix~\ref{app:exp-details}.

We evaluate our task selection framework on the number of policies that must be trained.
The primary cost in solving a family of contextual tasks stems from training individual controllers, which is computationally expensive. Therefore, a crucial metric of success is minimizing the total number of policies required to cover the entire context space. 
We compare our greedy algorithm (Algorithm~\ref{alg:meta_alg}) against baselines: a uniform grid selection and a random selection.
A key limitation of the uniform grid baseline is its dependence on a pre-selected grid resolution. A coarse grid may fail to place policies in critical regions, while a fine grid is inefficient. Greedy approach avoids this hyperparameter tuning by adaptively placing policies based on the local performance landscape.
Figure~\ref{fig:sample_efficiency_alg} shows coverage versus compute for MSD and CartPole.
Across both systems, greedy selection achieves a given coverage level using fewer trained policies than uniform or random selection. The advantage is largest in regions where the performance landscape is anisotropic or exhibits sharp changes (visible as early steep gains for Greedy). Importantly, the gap persists until near-complete coverage, indicating that gains do not rely on cherry-picking easy regions.
\begin{figure*}[!t]
    \centering
    \subfigure[Mass-Spring-Damper (MSD)]{%
        \includegraphics[width=0.49\textwidth]
            {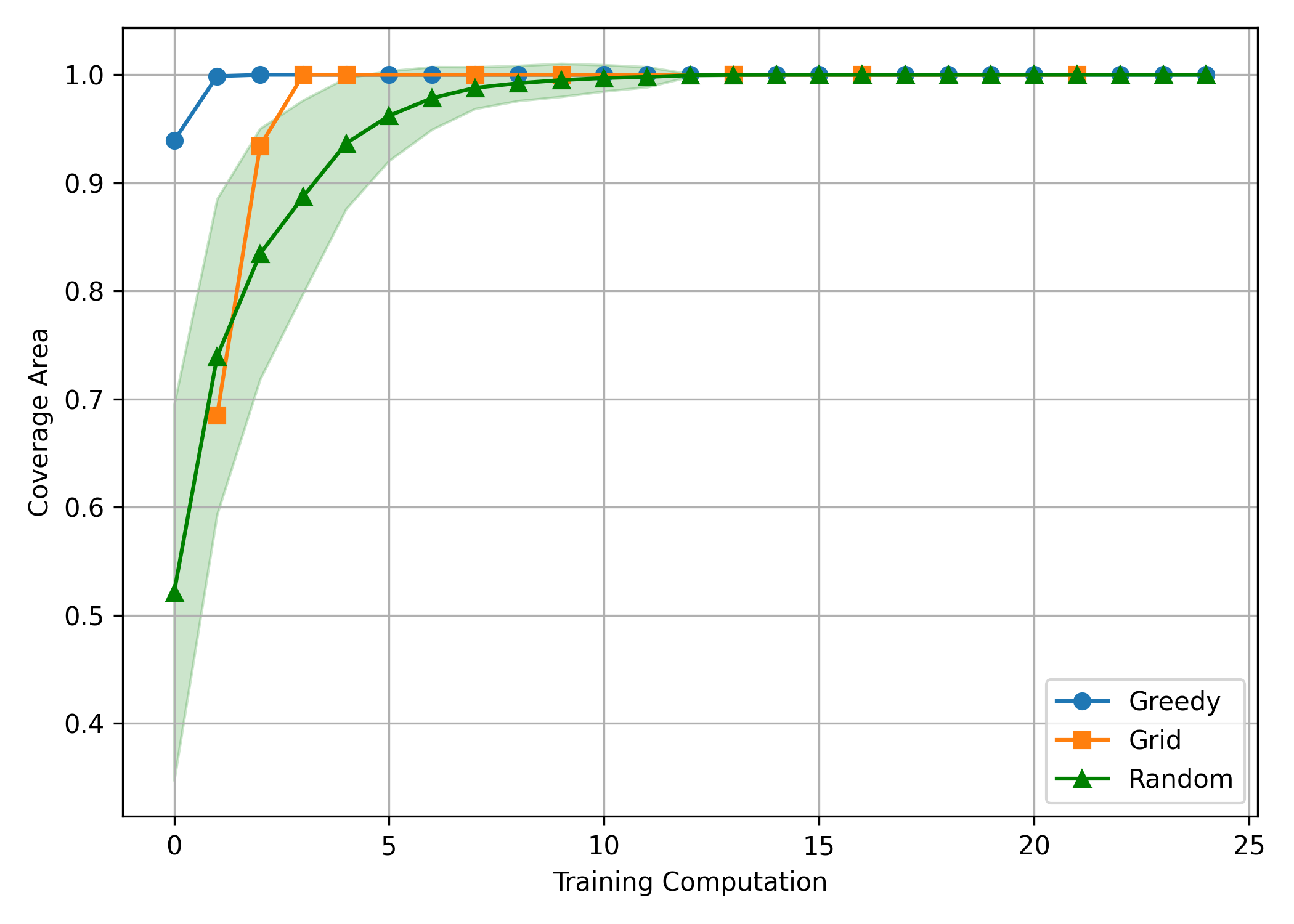}
        \label{subfig:msd_efficiency}
    }
    \hfill
    \subfigure[CartPole]{%
        \includegraphics[width=0.49\textwidth]
            {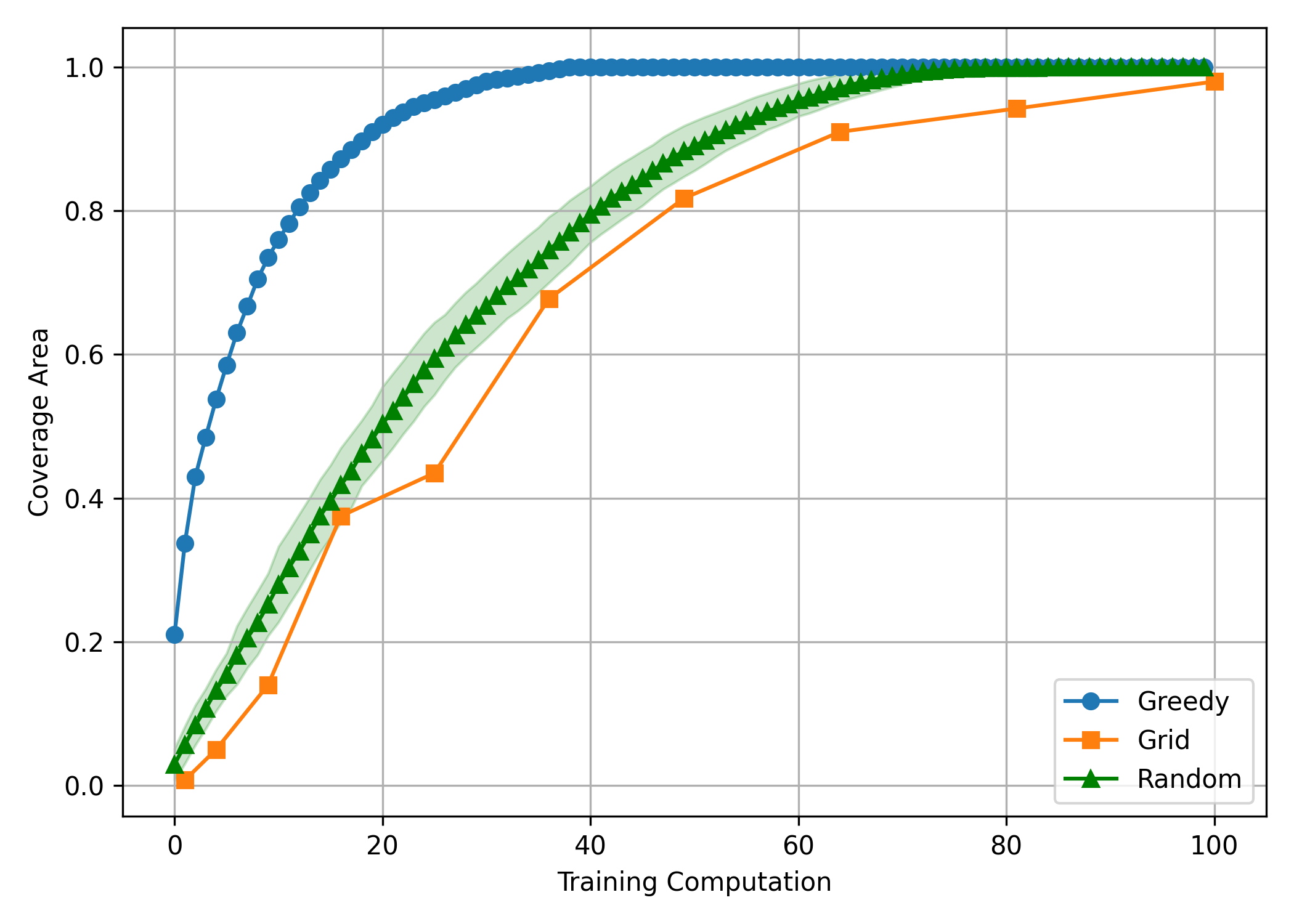}
        \label{subfig:cartpole_efficiency}
    }
    \caption{\textbf{Coverage vs. training computation.}
    (a) Mass-Spring-Damper with LQR policies;
    (b) CartPole with PPO policies.
    \vspace{-1em}}
    \label{fig:sample_efficiency_alg}
\end{figure*}
\vspace{-1em}
\section{Conclusion}\label{sec:conclusion}
We introduced a performance-centric, directional task dissimilarity---the signed divergence---and used it to define task-space complexity: the minimum number of context-independent policies needed to guarantee $\varepsilon$-level performance throughout a context space. Under mild local performance smoothness, we derived inner/outer geometric certificates and instance-dependent volume bounds. Casting source selection as set cover leads to a simple greedy algorithm with a harmonic-factor guarantee that, in practice, achieves the same $\varepsilon$-coverage with substantially fewer trained policies than uniform or random baselines on both linear and nonlinear systems.
The main limitation of the present work is explicit: exact greedy selection currently relies on oracle or exhaustive tolerance-set construction. We view this not as a flaw in the complexity definition, but as the natural separation between a structural theory of task-space complexity and the estimation problem required to make that theory scalable.

Future work can develop statistically efficient estimators of local loss rates and certified $\varepsilon$-tolerance sets via a geometric set-cover formulation with LP relaxation and VC-dimension-based guarantees. One could also extend this framework to multi-task or meta-learning settings, where the goal is to train a single, adaptable policy rather than a discrete set. 
\jhedit{An important direction is to integrate certificate-driven selection with \jhedit{multi-task} or switching policies inside $\Pi$, thereby tightening complexity further while preserving certifiability.}

\acks{This work was partially supported by the National Science Foundation (NSF) under the CAREER award (\#2239566) and CCF (\#2236484), and the Kwanjeong Educational Foundation Ph.D. scholarship program.}

\bibliography{references}

\clearpage
\appendix
\section*{Appendix}
\renewcommand{\thesection}{\Alph{section}}

\section{Proof of Proposition~\ref{prop:divergence-vs-smoothness}}\label{app:proof_divergence-vs-smoothness}

\begin{proof}
    We construct a CMDP family where transition kernels differ maximally at an unreachable state while the signed divergence is zero.
    
    Consider a finite-horizon MDP with $\mathcal{S} = \{s_\text{start}, s_\text{safe}, s_\text{critical}, s_\text{end}\}$, $\mathcal{A} = \{a_1, a_2\}$\jhedit{, horizon $T=2$, and reward $1$ iff $s_{\text{end}}$ is reached.}
    Let the policy class under consideration \jhedit{$\Pi=\{\pi\}$} such that $\pi(s_\text{start}) = a_1$. The transition dynamics from the start state and action $a_1$ is $p(s_\text{safe} | s_\text{start}, a_1) = 1$, which is independent of the context $\theta$.
    Any other action from $s_\text{start}$, or any action from $s_\text{safe}$, leads to the terminal state $s_\text{end}$ with probability 1. Therefore, for the policy $\pi$, the trajectory is always $s_\text{start} \xrightarrow{a_1} s_\text{safe} \xrightarrow{a_k} s_\text{end}$, yielding a total reward of 1.
    \jhedit{Let contexts differ only at $s_{\text{critical}}$, with $p^{\theta_1}(s_{\text{end}}\mid s_{\text{critical}},a)=1$ and $p^{\theta_2}(s_{\text{start}}\mid s_{\text{critical}},a)=1$ for all $a$.} 
    Under $\theta_1$, the critical state leads to termination, while under $\theta_2$, it leads back to the start, creating a loop and yielding 0 reward.
    
    \jhedit{Since $\mu_\pi(s_{\text{critical}})=0$, $D(\theta_1;\theta_2)=\sup_{\pi'\in\Pi}(J(\theta_1,\pi')-J(\theta_2,\pi'))=J(\theta_1,\pi)-J(\theta_2,\pi)=0$. Yet for any $a$, $\|p^{\theta_1}(\cdot\mid s_{\text{critical}},a)-p^{\theta_2}(\cdot\mid s_{\text{critical}},a)\|_1=\|\delta_{s_{\text{end}}}-\delta_{s_{\text{start}}}\|_1=2$. Moreover, along a continuous interpolation between $\theta_1$ and $\theta_2$, the local Lipschitz constant of $\theta\mapsto p^\theta(\cdot\mid s_{\text{critical}},a)$ can be made arbitrarily large near the switching point while $D(\theta_1;\theta_2)=0$. }
    This shows that parameter-based smoothness can be a poor indicator of performance generalization, whereas our divergence metric correctly identifies that tasks $\theta_1$ and $\theta_2$ are equivalent from the perspective of the policy set $\Pi$.
\end{proof}

\section{Proof of Proposition~\ref{prop:perf_weaker_than_model}}\label{app:proof_perfweakerthanmodel}


\begin{proof}
    Let $\pi \in \Pi$ be an arbitrary policy. 
    The performance is given by $J(\theta, \pi) = \mathbb{E}_{s_0 \sim q^\theta}[V_\pi^\theta(s_0)]$. Assuming a fixed initial state distribution $q$ across contexts, the performance difference is bounded by the maximum difference in the value function as $|J(\theta_1, \pi) - J(\theta_2, \pi)| \le \|V_\pi^{\theta_1} - V_\pi^{\theta_2}\|_\infty$, where $V_\pi^\theta$ is the value function for policy $\pi$ in context $\theta$. The value function is the unique fixed point of the Bellman operator $T_\pi^\theta$, defined as $(T_\pi^\theta V)(s) = r^\theta(s, \pi(s)) + \gamma \sum_{s'} p^\theta(s'|s, \pi(s)) V(s').$
    Using the fixed-point property, $V_\pi^\theta = T_\pi^\theta V_\pi^\theta$, we analyze the difference as:
    $\|V_\pi^{\theta_1} - V_\pi^{\theta_2}\|_\infty = \|T_\pi^{\theta_1}V_\pi^{\theta_1} - T_\pi^{\theta_2}V_\pi^{\theta_2}\|_\infty  \le \|T_\pi^{\theta_1}V_\pi^{\theta_1} - T_\pi^{\theta_1}V_\pi^{\theta_2}\|_\infty  + \|T_\pi^{\theta_1}V_\pi^{\theta_2} - T_\pi^{\theta_2}V_\pi^{\theta_2}\|_\infty.$
    Using the $\gamma$-contraction of $T_\pi^{\theta_1}$ with respect to the infinity norm, rearranging gives:
    \begin{equation}
        (1-\gamma)\|V_\pi^{\theta_1} - V_\pi^{\theta_2}\|_\infty \le \|T_\pi^{\theta_1}V_\pi^{\theta_2} - T_\pi^{\theta_2}V_\pi^{\theta_2}\|_\infty.
        \label{eq:value_diff_bound}
    \end{equation}
    Now, we bound the term on RHS. For any state $s$:
    \begin{align}
    \left|(T_\pi^{\theta_1}-T_\pi^{\theta_2})\,V_\pi^{\theta_2}(s)\right|
    &= \left|\, r^{\theta_1}(s,a)-r^{\theta_2}(s,a) +\gamma \textstyle\sum_{s'} \bigl(p^{\theta_1}(s'|s,a)-p^{\theta_2}(s'|s,a)\bigr)\, V_\pi^{\theta_2}(s') \right| \nonumber\\
    &\le \bigl|r^{\theta_1}(s,a)-r^{\theta_2}(s,a)\bigr| + \gamma \left|\textstyle\sum_{s'} \bigl(p^{\theta_1}(s'|s,a)-p^{\theta_2}(s'|s,a)\bigr)\, V_\pi^{\theta_2}(s')\right|.
    \end{align}

    where $a = \pi(s)$. Using the smoothness assumptions from Definition~\ref{def:smooth-modi} and the fact that the value function is bounded by $\|V_\pi^{\theta_2}\|_\infty \le \frac{R_{\max}}{1-\gamma}$, where $R_{\max} = \sup_{\theta,s,a}|r^\theta(s,a)|$:
    \begin{align}
        |(T_\pi^{\theta_1}V_\pi^{\theta_2})(s) - (T_\pi^{\theta_2}V_\pi^{\theta_2})(s)| 
        &\le L_r \phi(\theta_1, \theta_2) + \gamma \|p^{\theta_1}(\cdot|s,a) - p^{\theta_2}(\cdot|s,a)\|_1 \|V_\pi^{\theta_2}\|_\infty \nonumber \\
        &\le L_r \phi(\theta_1, \theta_2) + \gamma L_p \phi(\theta_1, \theta_2) \tfrac{R_{\max}}{1-\gamma}.
    \end{align}
    Taking the supremum over all states $s$, we get:
    \begin{equation}
        \|T_\pi^{\theta_1}V_\pi^{\theta_2} - T_\pi^{\theta_2}V_\pi^{\theta_2}\|_\infty \le (L_r + \tfrac{\gamma L_p R_{\max}}{1-\gamma})\phi(\theta_1, \theta_2).
    \end{equation}
    Substituting this back into Equation~\ref{eq:value_diff_bound}:
    \begin{equation}
        \|V_\pi^{\theta_1} - V_\pi^{\theta_2}\|_\infty \le (\tfrac{L_r}{1-\gamma} + \tfrac{\gamma L_p R_{\max}}{(1-\gamma)^2})\phi(\theta_1, \theta_2).
    \end{equation}
    \jhedit{Then, by definition,} $L_{\text{model}} = \frac{L_r}{1-\gamma} + \frac{\gamma L_p R_{\max}}{(1-\gamma)^2}$. We have shown that for any policy $\pi$, $|J(\theta_1,\pi) - J(\theta_2,\pi)| \le L_{\text{model}} \phi(\theta_1, \theta_2)$. Since this holds for all $\pi \in \Pi$, it must also hold for the supremum:
    \begin{equation}
        L_{\text{perf}}(\Pi) = \sup_{\pi \in \Pi} \sup_{\theta_1 \neq \theta_2} \tfrac{|J(\theta_1,\pi) - J(\theta_2,\pi)|}{\phi(\theta_1,\theta_2)} \le L_{\text{model}}.
    \end{equation}
\end{proof}

\section{Why directionality matters}\label{app:directionality}
\jhedit{Generalization gap is inherently directional by definition. Notice that reversing the arrow asks a different question because the policy you deploy has been trained somewhere else. Any symmetric metric used in standard heuristic symmetric (e.g., Euclidean/Wasserstein on parameters or rewards) cannot encode this thus can misguide source selection. Our signed divergence is asymmetric by design to capture this effect.}

\jhedit{\textbf{Inherent Asymmetry of Signed Divergence.}}
\jhedit{For exposition, fix a common policy set $\Pi$. Let $f_\Pi(\pi):=J(\theta_1,\pi)-J(\theta_2,\pi)$. Then, $D(\theta_1;\theta_2)=\sup_{\pi\in\Pi} f_\Pi(\pi), D(\theta_2;\theta_1)=\sup_{\pi\in\Pi}\{-f_\Pi(\pi)\}=-\inf_{\pi\in\Pi} f_\Pi(\pi)$.
Unless $f_\Pi(\pi)$ takes the same value for all $\pi$ (a degenerate case), we have $\sup f_\Pi \neq -\inf f_\Pi$, hence $D(\theta_1;\theta_2)\neq D(\theta_2;\theta_1)$. In practice, the asymmetry can be even stronger because the admissible sets $\Pi(\theta_1)$ and $\Pi(\theta_2)$ need not coincide.}

\jhedit{\textbf{Toy CMDP illustrating directionality.}}
\jhedit{Consider a one-step CMDP with start state $s_0$ and terminal states $s_A,s_B$. Action $a_A$ (resp. $a_B$) deterministically reaches $s_A$ (resp. $s_B$). Let $\Pi=\{\pi_A,\pi_B\}$ where $\pi_A(s_0)=a_A$ and $\pi_B(s_0)=a_B$. Rewards depend on context: at $\theta_1$, $(r(s_A),r(s_B))=(10,0)$; at $\theta_2$, $(5,8)$. Then, $f_\Pi(\pi_A)=10-5=5, f_\Pi(\pi_B)=0-8=-8$, so $D(\theta_1;\theta_2)=\sup\{5,-8\}=5$ while $D(\theta_2;\theta_1)=-\inf\{5,-8\}=8$. The generalization gap differs by direction.}

\jhedit{\textbf{Misguided source selection under symmetric distances.}
Let $\varepsilon\in(5,8]$ in the toy CMDP above. Any chooser that relies only on a symmetric distance $\delta$ (e.g., Euclidean/Wasserstein on rewards/parameters) is indifferent between $\theta_1$ and $\theta_2$ since $\delta(\theta_1,\theta_2)=\delta(\theta_2,\theta_1)$, and may select $\theta_2$ as the source. This fails the tolerance requirement because $D(\theta_2;\theta_1)=8>\varepsilon$, i.e., $\theta_1\notin S_\varepsilon(\theta_2)$. In contrast, selecting $\theta_1$ succeeds since $D(\theta_1;\theta_2)=5\le\varepsilon$, so $\theta_2\in S_\varepsilon(\theta_1)$.}

\section{Discretization Details}\label{app:discretization}
In practice, we solve the set cover problem on a finite grid of points, $\Theta_{\text{grid}} \subseteq \Theta$, rather than the entire continuous space. However, covering all points on this grid does not automatically guarantee that the spaces between the grid points are also covered, introducing a potential \emph{discretization gap}.
To address this gap while preserving the original guarantee, we can adjust our $\varepsilon$-tolerance set. For instance, if the set has the shape of a ball, we effectively shrink the radius of our balls to create a buffer. Let $\delta_{\text{grid}}$ be the resolution of our grid. To ensure a cover for the grid translates to a valid cover for the entire continuous space, we adjust the radius by $\delta_{\text{grid}}$.
For instance, consider a $d$-dimensional hypercubic grid with uniform spacing $h$. The point furthest from any grid node is the center of a hypercube, which gives a maximum discretization error of $\delta_{\text{grid}}=\frac{h\sqrt{d}}{2}$. By using an adjusted radius, $r_{\text{adj}} = \jhedit{\max\{0,\,r - \delta_{\text{grid}}\}}$, we ensure that if a grid point is covered, the entire region around it up to the next grid point is also covered.
\jhedit{For notational simplicity, we will use $\Theta$ to refer to the relevant finite context sets, with the understanding that the discretization gap bridging step is implicitly handled.}

\begin{proposition}[Grid-to-continuum cover via radius shrink]\label{prop:grid_shrink}
Let $\Theta_{\mathrm{grid}}$ be a $d$-dimensional hypercubic grid with spacing $h$ over $\Theta$. If each grid point $\theta$ is covered by a ball $B_r(\theta)$, then the union of shrunken balls $B_{r-\delta_{\mathrm{grid}}}(\theta)$ with $\delta_{\mathrm{grid}}=\frac{\sqrt{d}}{2}h$ covers all of $\Theta$ (interpret negative radii as empty). 
\end{proposition}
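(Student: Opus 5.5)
The plan is to read the proposition as a statement about certified balls around \emph{source} contexts, and to prove it with a covering-radius bound plus the triangle inequality. Concretely, let $\theta_1,\dots,\theta_N$ be selected sources with certified radii $r_i$, for instance $r_i=\varepsilon/L^r(\theta_i)$ from Lemma~\ref{lemma:gen-bounds}, so that $B_{r_i}(\theta_i)\subseteq S_\varepsilon(\theta_i)$. Suppose the grid-level set cover is solved with the shrunken balls: every grid point lies in some $B_{r_i-\delta_{\mathrm{grid}}}(\theta_i)$. The claim to prove is then $\Theta\subseteq\bigcup_i B_{r_i}(\theta_i)$, and hence $\Theta\subseteq\bigcup_i S_\varepsilon(\theta_i)$. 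This matches the radius adjustment $r_{\mathrm{adj}}=\max\{0,r-\delta_{\mathrm{grid}}\}$ described above. An empty shrunken ball, which occurs when $r_i<\delta_{\mathrm{grid}}$, simply covers no grid points, so such sources contribute nothing and can be ignored.

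\textbf{Step 1 (covering radius of the grid).} I would show that every $x\in\Theta$ has a grid node $g$ with $\|x-g\|_2\le \tfrac{\sqrt d}{2}h$. This uses the standing assumption that $\Theta$ lies inside the axis-aligned box spanned by the grid, including its boundary nodes. Under that assumption $x$ lies in some closed cell $\prod_k[a_k,a_k+h]$ whose $2^d$ vertices are all grid nodes. Rounding each coordinate of $x$ to the nearer endpoint of its interval gives a vertex $g$ with $|x_k-g_k|\le h/2$ for every $k$. Summing squares then gives $\|x-g\|_2\le\sqrt{d}\,h/2=\delta_{\mathrm{grid}}$, with equality at the cell center, so the constant is tight.

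\textbf{Step 2 (triangle inequality).} Fix $x\in\Theta$ and take the node $g$ from Step 1. By the grid-level cover there is an $i$ with $\|g-\theta_i\|_2\le r_i-\delta_{\mathrm{grid}}$. Then
$$\|x-\theta_i\|_2\le\|x-g\|_2+\|g-\theta_i\|_2\le \delta_{\mathrm{grid}}+r_i-\delta_{\mathrm{grid}}=r_i,$$
so $x\in B_{r_i}(\theta_i)\subseteq S_\varepsilon(\theta_i)$. Since $x$ was arbitrary, the original continuous-space guarantee of \eqref{opt} is preserved.

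The main obstacle is not the computation but pinning down the hypotheses. The literal phrasing, where each grid point is covered by its own ball and the shrunken balls then cover $\Theta$, has to be recast in the direction above: shrunken balls cover the grid, and full balls cover the continuum. It is also essential that the grid's cells tile $\Theta$. If $\Theta$ pokes outside the grid's bounding box, Step 1 fails near the boundary. I would handle this by assuming the grid includes nodes on (or beyond) $\partial\Theta$, or else by enlarging $\delta_{\mathrm{grid}}$ to the Hausdorff-type covering radius $\sup_{x\in\Theta}\min_{g}\|x-g\|_2$; Step 2 goes through unchanged with that radius.
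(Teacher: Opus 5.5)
Your argument is correct and follows the same route as the paper: every point of $\Theta$ lies within $\delta_{\mathrm{grid}}=\frac{\sqrt d}{2}h$ of a grid node, and then the triangle inequality finishes; your Step~1 supplies the covering-radius justification and boundary caveat that the paper only asserts. Your recasting of the hypothesis is needed, not cosmetic: the paper's one-line proof concludes $\tilde\theta\in B_{r-\delta_{\mathrm{grid}}}(\theta)$ from the node lying in $B_r(\theta)$, but the triangle inequality only gives $\tilde\theta\in B_{r+\delta_{\mathrm{grid}}}(\theta)$, so the valid statement is yours (shrunken balls cover the grid, hence full balls cover $\Theta$), which is also what the $r_{\mathrm{adj}}$ discussion before the proposition intends.
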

\begin{proof}
Any $\tilde{\theta}\in\Theta$ lies within distance at most $\delta_{\mathrm{grid}}$ of some grid node. If that node is in $B_r(\theta)$, then $\tilde{\theta}\in B_{r-\delta_{\mathrm{grid}}}(\theta)$ by the triangle inequality.
\end{proof}

\section{Experimental Setup}\label{app:exp-details}
\paragraph{Mass-Spring-Damper system with LQR controller}
The MSD system operates in a gravity-free environment and consists of a mass $m$ connected to a spring with stiffness $k$ and a damper with damping coefficient $c$, connected in parallel. We consider the system variation with different mass parameters.
The dynamics of the Mass-Spring-Damper system are governed by Newton's second law and are described by the following equation:
\begin{equation}
    m \ddot{x}(t) + c \dot{x}(t) + k x(t) = F(t),
\end{equation}
where $x(t)$ represents the displacement of the mass from its equilibrium position, $\dot{x}(t)$ is its velocity, and $\ddot{x}(t)$ denotes acceleration. The term $F(t)$ is the external force acting on the mass.

\begin{figure}[!ht]
\centering
\subfigure[Mass-Spring-Damper System.][c]{%
\label{fig:msd}%
\begin{tikzpicture}[scale=1.2,every node/.style={scale=1.2}]
\draw[thick] (-1,0) -- (6,0);
\draw[fill=gray!20] (-1,0) rectangle (-0.5,2);
\draw[fill=blue!20] (3,0.4) rectangle (4.5,1.6);
\node at (3.75,1) {$m$};
\draw[thick,decorate,
    decoration={coil,aspect=0.8,segment length=5pt,amplitude=5pt}]
    (-0.5,1.5) -- (3,1.5);
\node at (1.5,1.9) {$k$};
\draw[thick] (-0.5,0.5) -- (1,0.5);
\draw[thick]
    (1,0.25) -- (1,0.75) --
    (2,0.75) -- (2,0.25) --
    (1,0.25);
\draw[thick] (2,0.5) -- (3,0.5);
\node at (1.5,0.9) {$c$};
\draw[->,thick,red] (4.5,1) -- (5.5,1);
\node[right] at (5.5,1) {$F(t)$};
\draw[<->,thick] (-0.5,-0.5) -- (3.75,-0.5);
\node at (1.5,-0.8) {$x(t)$};
\end{tikzpicture}%
}%
\hfill
\subfigure[CartPole System.][c]{%
\label{fig:cartpole}%
\begin{tikzpicture}[scale=1.8,auto]
\def\poleangle{20}
\def\polelength{1.5}
\draw[thick] (-1.5,-0.12) -- (1.5,-0.12);
\draw[fill=gray!20] (-0.6,0) rectangle (0.6,0.4);
\filldraw[black,fill=gray!80] (-0.3,0) circle (0.12);
\filldraw[black,fill=gray!80] (0.3,0) circle (0.12);
\coordinate (pivot) at (0,0.4);
\coordinate (mass) at ($(pivot)+(90+\poleangle:\polelength)$);
\draw[thick,black] (pivot) -- (mass);
\fill[red!80!black] (mass) circle (0.2);
\end{tikzpicture}%
}
\caption{\jhedit{Diagrams of two dynamical systems.}}
\label{fig:systems}
\end{figure}

Defining the state vector $\mathbf{x}(t)$ and input $ u(t) $ as $\mathbf{x}(t) = \begin{bmatrix} x(t) & \dot{x}(t) \end{bmatrix}^\top$ and $\mathbf{u}(t) = \begin{bmatrix} 0 & F(t) \end{bmatrix}^\top$, the state-space equations are $\dot{\mathbf{x}}(t) = A_{(k,m,c)} \mathbf{x}(t) + B_{(k,m,c)} \mathbf{u}(t)$ with $A_{(k,m,c)} = \begin{bmatrix} 0 & 1 \\ -\frac{k}{m} & -\frac{c}{m} \end{bmatrix}$ and $B_{(k,m,c)} = \begin{bmatrix} 0 & \frac{1}{m} \end{bmatrix}^\top$, where $x(t)$ represents the displacement of the mass from its equilibrium position, $\dot{x}(t)$ is its velocity, and $F(t)$ is the external force acting on the mass. The output equation (measuring displacement) is $y(t) = x(t) = C \mathbf{x}(t) = \begin{bmatrix} 1 & 0 \end{bmatrix} \mathbf{x}(t)$.
In our CMDP formulation for the MSD system, the context vector is defined by the spring stiffness and the damping coefficient, i.e., $\theta = [k, c]^\top$, while the mass is held constant at $m=1.0$. The context space $\Theta$ is a two-dimensional grid where both $k$ and $c$ range from $0.1$ to $8.0$, discretized into 100 points each. 
\jhedit{We fix the initial condition distribution and rollout horizon to isolate context effects on performance.}

For each system, an optimal controller can be derived using LQR, which minimizes the cost function $J = \int_{0}^{\infty} \left( x^\top(t) Q x(t) + u^\top(t) R u(t) \right) dt$, where $Q$ and $R$ are positive semi-definite and positive definite matrices, respectively, used to weight state and control efforts.
The LQR controller is computed by solving the Continuous-time Algebraic Riccati Equation (CARE): $A^\top P + P A - P B R^{-1} B^\top P + Q = 0,$ where $P$ is the unique positive definite solution to CARE. The optimal gain matrix $K^*$ is then given by $K^* = R^{-1} B^\top P.$
\jhedit{For this experiment, we set the LQR weighting matrices to $Q=\operatorname{diag}(10,4)$ and $R = [1]$, penalizing the displacement more heavily than velocity.}
\jhedit{All controllers are evaluated under the same quadratic criterion to maintain comparability across contexts.}
To define the policy set $\Pi(\theta)$ for a given context $\theta$, we model the variability inherent in practical controller synthesis. Instead of using only the single optimal gain $K^*$, we create an ensemble of controllers by adding Gaussian noise to the optimal gain. Specifically, for each context $\theta$, the policy set is defined as $\Pi(\theta)=\{K_1, K_2, \dots, K_N\}=\{K^*+\Delta_i\}_{i=1}^N$, where each $\Delta_i$ is a random perturbation sampled from a zero-mean normal distribution.

\paragraph{CartPole with DRL controller}
To further validate our algorithm, we examine the classic CartPole system with DRL controller as a test case. We use CARL, a CMDP variant of the standard OpenAI Gym implementation \citep{benjamins2023contextualize}. In this scenario, we consider task variations by varying the pole length and the mass of cart as context parameters. The objective is to apply forces to the cart to balance the pole in an upright position while keeping the cart within the track boundaries.
For the CartPole CMDP, the context vector is $\theta = [\text{pole length}, \text{cart mass}]^\top$. 
The context space $\Theta$ is a $20 \times 20$ grid where the pole length ranges from $0.25$ to $5.0$ and the cart mass ranges from $0.5$ to $10.0$.
For each MDP, we train the policy for different random seeds using Proximal Policy Optimization (PPO) \citep{schulman2017proximal} with Stable Baseline 3 \citep{stable-baselines3}, to create the policy set $\Pi$. The performance $J(\theta, \pi)$ is the expected cumulative reward (balancing time) over an episode. This environment serves as a benchmark for nonlinear systems where the dynamics are not explicitly known.
\jhedit{For all policies, we evaluate $J(\theta,\pi)$ as the expected episodic return under the same seeds used for certification estimates.}
\end{document}